\def\eqref#1{equation~\ref{#1}}
\def\1{\bm{1}}
\DeclareMathAlphabet{\mathsfit}{\encodingdefault}{\sfdefault}{m}{sl}
\SetMathAlphabet{\mathsfit}{bold}{\encodingdefault}{\sfdefault}{bx}{n}
\DeclareMathOperator*{\argmax}{arg\,max}
\newtheorem{theorem}{Theorem}[section]
\icmltitlerunning{Revisiting Prioritized Experience Replay: A Value Perspective}
\begin{document}

\twocolumn[
\icmltitle{Revisiting Prioritized Experience Replay: A Value Perspective}

\begin{icmlauthorlist}
\icmlauthor{Ang A. Li}{pku}
\icmlauthor{Zongqing Lu}{pku}
\icmlauthor{Chenglin Miao}{pku}
\end{icmlauthorlist}

\icmlaffiliation{pku}{Peking University}

\icmlcorrespondingauthor{Zongqing Lu}{zongqing.lu@pku.edu.cn}
\icmlcorrespondingauthor{Chenglin Miao}{chenglin.miao@pku.edu.cn}

\icmlkeywords{Machine Learning, ICML}

\vskip 0.3in

]

\printAffiliationsAndNotice{}
\begin{abstract}
Experience replay enables off-policy reinforcement learning (RL) agents to utilize past experiences to maximize the cumulative reward. Prioritized experience replay that weighs experiences by the magnitude of their temporal-difference error ($|\text{TD}|$) significantly improves the learning efficiency. But how $|\text{TD}|$ is related to the importance of experience is not well understood. We address this problem from an economic perspective, by linking $|\text{TD}|$ to \textit{value of experience}, which is defined as the value added to the cumulative reward by accessing the experience. We theoretically show the value metrics of experience are upper-bounded by $|\text{TD}|$ for Q-learning. Furthermore, we successfully extend our theoretical framework to maximum-entropy RL by deriving the lower and upper bounds of these value metrics for soft Q-learning, which turn out to be the product of $|\text{TD}|$ and ``on-policyness" of the experiences. Our framework links two important quantities in RL: $|\text{TD}|$ and value of experience. We empirically show that the bounds hold in practice, and experience replay using the upper bound as priority improves maximum-entropy RL in Atari games.
\end{abstract}

\section{Introduction}
Learning from important experiences prevails in nature. In rodent hippocampus, memories with higher importance, such as those associated with rewarding locations or large reward-prediction errors, are replayed more frequently \citep{Michon2019, roscow2019behavioural, salvetti2014role}. Psychophysical experiments showed that participants with more frequent replay of high-reward associated memories show better performance in memory tasks \citep{Gruber2016, schapiro2018human}. 
As accumulating new experiences is costly, utilizing valuable past experiences is a key for efficient learning \citep{olafsdottir2018role}.

Differentiating important experiences from unimportant ones also benefits reinforcement learning (RL) algorithms \citep{katharopoulos2018not}. Prioritized experience replay (PER) \citep{Schaul2016} is an experience replay technique built on deep Q-network (DQN) \citep{Mnih2015}, which weighs the importance of samples by the magnitude of their temporal-difference error ($|\text{TD}|$). As a result, experiences with larger $|\text{TD}|$ are sampled more frequently. PER significantly improves the learning efficiency of DQN, and has been adopted \citep{Hessel2018,Horgan2018, Kapturowski2019} and extended \citep{daley2019reconciling, pan2018organizing, schlegel2019importance} by various deep RL algorithms. $|\text{TD}|$ quantifies the unexpectedness of an experience to a learning agent, and biologically corresponds to the signal of reward prediction error in dopamine system \citep{schultz1997neural, glimcher2011understanding}. 
However, how $|\text{TD}|$ is related to the importance of experience in the context of RL is not well understood.

We address this problem from an economic perspective, by linking $|\text{TD}|$ to \textit{value of experience} in RL. Recently in neuroscience field, a normative theory for memory access, based on Dyna framework \citep{sutton1990integrated}, suggests that a rational agent should replay the experiences that lead to most rewarding future decisions \citep{Mattar2018}. Follow-up research shows that optimizing the replay strategy according to the normative theory has advantage over prioritized experience replay with $|\text{TD}|$ \citep{zha2019experience}. Inspired by \cite{Mattar2018}, we define the value of experience as the increase in the expected cumulative reward resulted from updating on the experience. The value of experience quantifies the importance of experience from first principles: assuming that the agent is economically rational and has full information about the value of experience, it will choose the most valuable experience to update, which leads to most rewarding future decisions. As supplements, we derive two more value metrics, which correspond to the evaluation improvement value and policy improvement value due to update on an experience.

In this work, we mathematically show that these value metrics are upper-bounded by $|\text{TD}|$ for Q-learning. Therefore, $|\text{TD}|$ implicitly tracks the value of experience, and accounts for the importance of experience. We further extend our framework to maximum-entropy RL, which augments the reward with an entropy term to encourage exploration \citep{Haarnoja2017}. We derive the lower and upper bounds of these value metrics for soft Q-learning, which are related to $|\text{TD}|$ and ``on-policyness" of the experience. 
Experiments in grid-world maze and CartPole support our theoretical results for both tabular and function approximation RL methods, showing that the derived bounds hold in practice. Moreover, we show that experience replay using the upper bound as priority improves maximum-entropy RL (\textit{i.e.}, soft DQN) in Atari games.   

\section{Motivation}

\subsection{Q-learning and Experience Replay}
\label{qlearning}
We consider a Markov Decision Process (MDP) defined by a tuple $\{\mathcal{S},\mathcal{A},\mathcal{P}, \mathcal{R}, \gamma\}$, where $\mathcal{S}$ is a finite set of states, $\mathcal{A}$ is a finite set of actions, $\mathcal{P}$ is the transition function, $\mathcal{R}$ is the reward function, and $\gamma \in [0, 1]$ is the discount factor. A policy $\pi$ of an agent assigns probability $\pi(a|s)$ to each action $a \in \mathcal{A}$ given state $s \in \mathcal{S}$. The goal is to learn an optimal policy that maximizes the expected discounted return starting from time step $t$, $G_t = \sum_{i = 0}^{\infty}\gamma^i r_{t+i}$, where $r_t$ is the reward the agent receives at time step $t$. Value function $v_\pi(s)$ is defined as the expected return starting from state $s$ following policy $\pi$, and Q-function $q_\pi(s, a)$ is the expected return on performing action $a$ in state $s$ and subsequently following policy $\pi$. 

According to Q-learning \citep{Watkins1992}, the optimal policy can be learned through policy iteration: performing policy evaluation and policy improvement interactively and iteratively. For each policy evaluation, we update $Q(s,a)$, an estimate of $q_\pi(s, a)$, by 
\begin{equation}\nonumber
    Q_\text{new}(s, a) = Q_\text{old}(s, a) + \alpha \text{TD}(s, a, r, s'),
\end{equation}
where TD error $\text{TD}(s, a, r, s') = r + \gamma \max_{a'} Q_\text{old}(s', a') -  Q_\text{old}(s, a)$ and $\alpha$ is the step-size parameter. $Q_\text{new}$ and $Q_\text{old}$ denote the estimated Q-function before and after the update respectively. And for each policy improvement, we update the policy from $\pi_{\text{old}}$ to $\pi_{\text{new}}$ according to the newly estimated Q-function,
\begin{equation}\nonumber
    \pi_{\text{new}}  = \mathop{\argmax}_{a}   Q_\text{new}(s,a).
\end{equation}
Standard Q-learning only uses each experience once before disregarded, which is sample inefficient and can be improved by \textit{experience replay} technique \citep{lin1992self}. We denote the experience that the agent collected at time $k$ by a tuple $e_k = \{s_k, a_k, r_k, s_{k}' \}$. According to experience replay, the experience $e_k$ is stored into the replay buffer and can be accessed multiple times during learning. 

\subsection{Value Metrics of Experience}

To quantify the importance of experience, we derive three value metrics of experience. The utility of update on experience $e_k$ is defined as the value added to the cumulative discounted rewards starting from state $s_k$, after updating on $e_k$. Intuitively, choosing the most valuable experience for update will yield the highest utility to the agent. We denote such utility as the expected value of backup $\text{EVB}(e_k)$ \citep{Mattar2018},
\begin{align}
\text{EVB}(e_k) &= v_{\pi_{\text{new}}}(s_k) - v_{\pi_{\text{old}}}(s_k) \nonumber \\
 &=  \sum_{a}{\pi_{\text{new}}(a|s_k)q_{\pi_{\text{new}}}(s_k,a)} \nonumber \\
 & \qquad\qquad-\sum_{a} {\pi_{\text{old}}(a|s_k)q_{\pi_{\text{old}}}(s_k,a)},  \label{eq:evb}
\end{align}
where $\pi_{\text{old}}$, $v_{\pi_{\text{old}}}$ and $q_{\pi_{\text{old}}}$ are respectively the policy, value function and Q-function before the update, and $\pi_{\text{new}}$, $v_{\pi_{\text{new}}}$, and $q_{\pi_{\text{new}}}$ are those after. 
As the update on experience $e_k$ consists of policy evaluation and policy improvement, the value of experience can further be separated to evaluation improvement value $\text{EIV}(e_k)$ and policy improvement value $\text{PIV}(e_k)$ by rewriting (\ref{eq:evb}):
\begin{multline}\label{eq:evb2}
\text{EVB}(e_k) = \underbrace{\sum_{a}{[\pi_{\text{new}}(a|s_k) - \pi_{\text{old}}(a|s_k)]q_{\pi_{\text{new}}}(s_k,a)}}_{\text{PIV}(e_k)}+ \\
 \underbrace{\sum_{a}{\pi_{\text{old}}(a|s_k)[q_{\pi_{\text{new}}}(s_k,a)-q_{\pi_{\text{old}}}(s_k,a)]}}_{\text{EIV}(e_k)},
\end{multline}
where $\text{PIV}(e_k)$ measures the value improvements due to the change of the policy, and $\text{EIV}(e_k)$ captures those due to the change of evaluation. Thus, we have three metrics for the value of experience: $\text{EVB}$,  $\text{PIV}$ and  $\text{EIV}$.

\begin{figure*}[t!]
  \centering
  \includegraphics[width=.95\textwidth, clip=true, trim = 0mm 8mm 0mm 8mm]{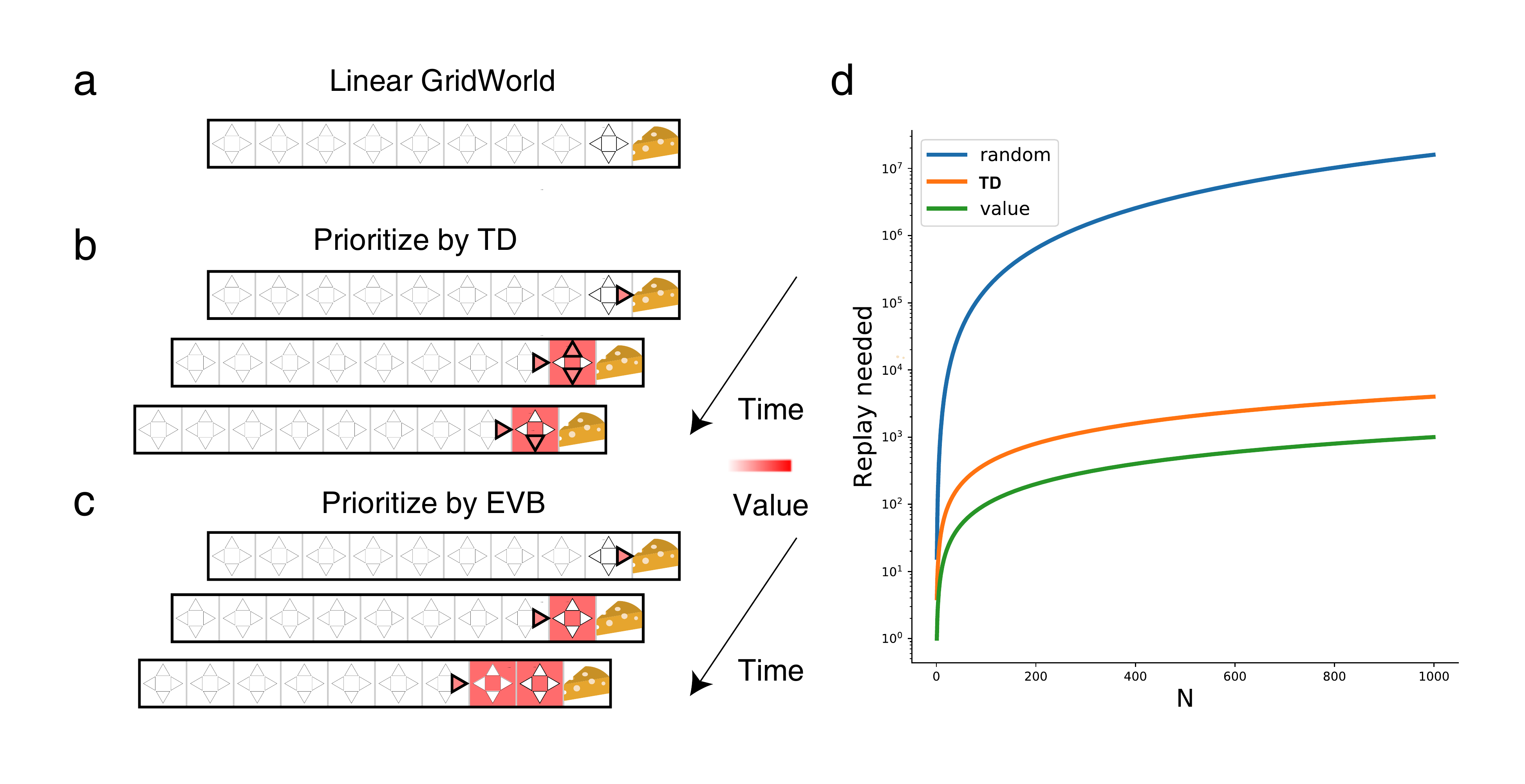}
  \caption{\textbf{a.} Illustration of the ``Linear Grid-World" example: there are $N$ grids and 4 actions (north, south, east, west). Reward for entering the goal state (cheese) is 1; reward is 0 elsewhere. \textbf{b-c.} Examples of prioritized experience replay by $|\text{TD}|$ and value of experience (EVB). The main difference is that EVB only prioritizes the experiences that are associated with the optimal policy; while $|\text{TD}|$ is sensitive to changes in value function and will prioritize non-optimal experiences, such as those associated with north or south. Here squares represent states, triangles represent actions, and experiences associated with the highest priority are highlighted. \textbf{d.} Expected number of replays needed to learn the optimal policy, as the number of grids changes: uniform replay (blue), prioritized by $|\text{TD}|$ (orange), and EVB (green).}
  \label{fig:fig1}
\end{figure*}

\subsection{Value Metrics of Experience in Q-Learning}
For Q-learning, we use Q-function to estimate the true action-value function. A backup over an experience $e_k$ consists of policy evaluation with Bellman operator and greedy policy improvement. As the policy improvement is greedy, we can rewrite value metrics of experience to simpler forms. EVB can be written as follows from (\ref{eq:evb}),
\begin{equation} \label{eq:evb3} 
\text{EVB}(e_k) = \max_a{Q_{\text{new}}(s_k,a)} - \max_a{Q_{\text{old}}(s_k,a)}.
\end{equation}
Note that EVB here is different from that in \cite{Mattar2018}: in our case, EVB is derived from Q-learning; while in their case, EVB is derived from Dyna, a model-based RL algorithm \citep{sutton1990integrated}. Similarly, from (\ref{eq:evb2}), PIV can be written as
\begin{equation} \label{eq:piv} 
\text{PIV}(e_k) = \max_a{Q_{\text{new}}(s_k,a)} - Q_{\text{new}}(s_k, a_{\text{old}}),
\end{equation}
where $a_{\text{old}} = \arg\max_a{Q_{\text{old}}(s_k,a)}$, and EIV can be written as
\begin{equation} \label{eq:eiv} 
\text{EIV}(e_k) = Q_{\text{new}}(s_k,a_{\text{old}}) - Q_{\text{old}}(s_k, a_{\text{old}}).
\end{equation}

\subsection{A Motivating Example}

We illustrate the potential gain of value of experience in a ``Linear Grid-World" environment (Figure~\ref{fig:fig1}a). This environment contains $N$ linearly-aligned grids and 4 actions (north, south, east, west). The rewards are rare: 1 for entering the goal state and 0 elsewhere. The solution for this environment is always choosing east.

We use this example to highlight the difference between prioritization strategies. Three agents perform Q-learning updates on the experiences drawn from the same replay buffer, which contains all the ($4N$) experiences and associated rewards. The first agent replays the experiences uniformly at random, while the other two agents invoke the oracle to prioritize the experiences, which greedily select the experience with the highest $|\text{TD}|$ or EVB respectively. In order to learn the optimal policy, agents need to replay the experiences associated with action east in a reverse order. 

For the agent with random replay, the expected number of replays required is $4N^2$ (Figure~\ref{fig:fig1}d). For the other two agents, prioritization significantly reduces the number of replays required: prioritization with $|\text{TD}|$ requires $4N$ replays, and prioritization with EVB only uses $N$ replays, which is optimal (Figure~\ref{fig:fig1}d). The main difference is that EVB only prioritizes the experiences that are associated with the optimal policy (Figure~\ref{fig:fig1}c), while $|\text{TD}|$ is sensitive to changes in the value function and will prioritize non-optimal experiences: for example, the agent may choose the experiences associated with south or north in the second update, which are not optimal but have the same $|\text{TD}|$ as the experience associated with east (Figure~\ref{fig:fig1}b). Thus, EVB that directly quantifies the value of experience can serve as an optimal priority.

\section{Upper Bounds of Value Metrics of Experience in Q-Learning}
\label{sec:3}

PER \citep{Schaul2016} greatly improves the learning efficiency of DQN. However, the underlying rationale is not well understood. Here, we prove that $|\text{TD}|$ is the upper bound of the value metrics in Q-learning. 

\begin{figure*}[t!]
  \centering
  \includegraphics[width=.85\textwidth, clip=true, trim = 50mm 0mm 50mm 0mm]{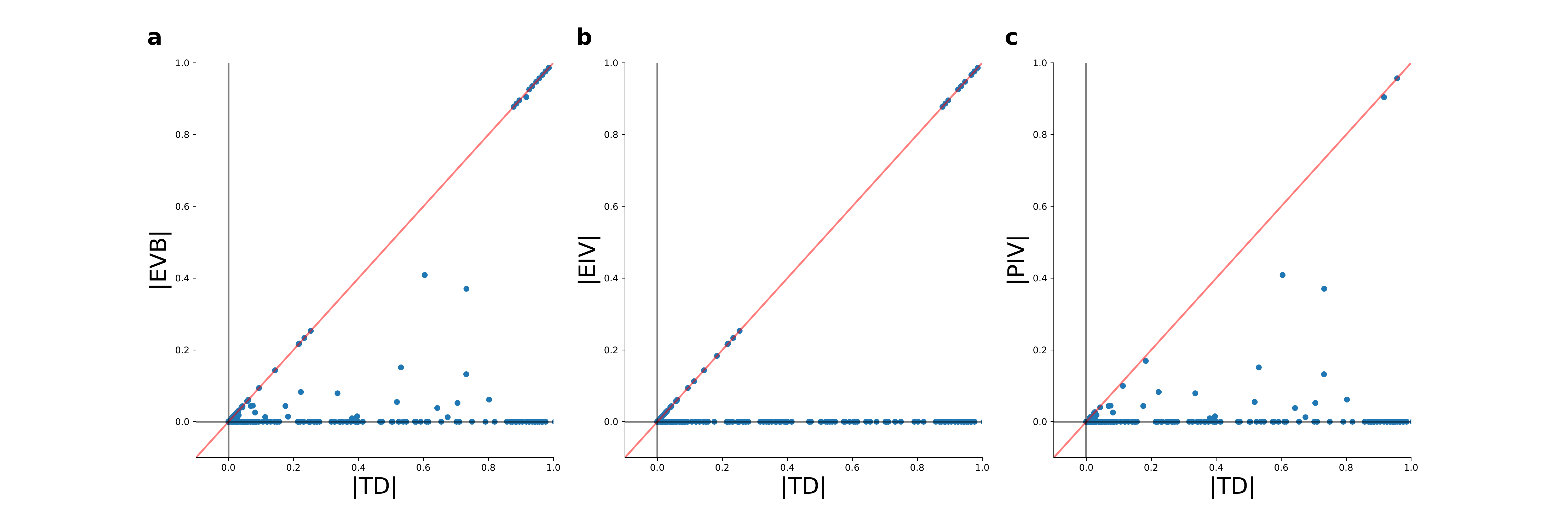}
\caption{The value metrics are upper-bounded by TD errors in Q-learning. \textbf{a-c.} $|\text{TD}|$ \textit{v.s.} $|\text{EVB}|$ (left), $|\text{EIV}|$ (middle) and $|\text{PIV}|$ (right) of a tabular Q-learning agent in a grid-world maze. The red line indicates the identity line. }
  \label{fig:fig2_icml}
\end{figure*}

\begin{theorem}
\label{the:1}
The three value metrics of experience $e_k$ in Q-learning ($|\text{EVB}|$, $|\text{PIV}|$ and $|\text{EIV}|$)  are upper-bounded by $\alpha|\text{TD}(s_k,a_k,r_k,s_k')|$, where $\alpha$ is a step-size parameter.
\end{theorem}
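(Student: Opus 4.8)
The plan is to exploit the fact that a single Q-learning backup is extremely local: it touches exactly one entry of the table. Concretely, I would first record that $Q_{\text{new}}(s,a) = Q_{\text{old}}(s,a)$ for every $(s,a) \neq (s_k,a_k)$, while $Q_{\text{new}}(s_k,a_k) = Q_{\text{old}}(s_k,a_k) + \alpha\,\text{TD}(s_k,a_k,r_k,s_k')$. Writing $\delta := \alpha\,\text{TD}(s_k,a_k,r_k,s_k')$, this says the two functions $Q_{\text{new}}(s_k,\cdot)$ and $Q_{\text{old}}(s_k,\cdot)$ on the finite action set $\mathcal{A}$ differ in at most one coordinate, and there by exactly $\delta$ (so by at most $|\delta|$). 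Everything else follows from this one observation together with the closed forms (\ref{eq:evb3}), (\ref{eq:piv}) and (\ref{eq:eiv}).

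For $\text{EIV}$, formula (\ref{eq:eiv}) gives $\text{EIV}(e_k) = Q_{\text{new}}(s_k,a_{\text{old}}) - Q_{\text{old}}(s_k,a_{\text{old}})$, which equals $\delta$ when $a_{\text{old}} = a_k$ and $0$ otherwise; hence $|\text{EIV}(e_k)| \le |\delta|$. For $\text{EVB}$ I would invoke the elementary Lipschitz property of the maximum: if $|f(a)-g(a)| \le c$ for all $a$ in a finite set, then $|\max_a f(a) - \max_a g(a)| \le c$. Applying this with $f = Q_{\text{new}}(s_k,\cdot)$, $g = Q_{\text{old}}(s_k,\cdot)$ and $c = |\delta|$, formula (\ref{eq:evb3}) immediately yields $|\text{EVB}(e_k)| \le |\delta|$.

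The remaining and, I expect, only nontrivial case is $\text{PIV}$, since the naive route (write $\text{PIV} = \text{EVB} - \text{EIV}$ as in (\ref{eq:evb2}) and apply the triangle inequality) only gives the weaker bound $2|\delta|$; one must use that the policy improvement is greedy. Note first that $\text{PIV}(e_k) \ge 0$ from (\ref{eq:piv}), because $a_{\text{old}}$ is merely one admissible action. I would then split into three cases. (i) If $a_{\text{old}} \neq a_k$, then $Q_{\text{new}}(s_k,a_{\text{old}}) = Q_{\text{old}}(s_k,a_{\text{old}}) = \max_a Q_{\text{old}}(s_k,a)$, so $\text{PIV}(e_k) = \text{EVB}(e_k)$ and the bound just proved applies. (ii) If $a_{\text{old}} = a_k$ and $\delta \ge 0$, then $Q_{\text{new}}(s_k,a_k) = Q_{\text{old}}(s_k,a_k) + \delta \ge Q_{\text{old}}(s_k,a) = Q_{\text{new}}(s_k,a)$ for every $a \neq a_k$, so $a_k$ remains optimal and $\text{PIV}(e_k) = 0$. (iii) If $a_{\text{old}} = a_k$ and $\delta < 0$, then for every $a$ we have $Q_{\text{new}}(s_k,a) \le \max_b Q_{\text{old}}(s_k,b) = Q_{\text{old}}(s_k,a_k)$, hence $\text{PIV}(e_k) = \max_a Q_{\text{new}}(s_k,a) - Q_{\text{new}}(s_k,a_k) \le Q_{\text{old}}(s_k,a_k) - \bigl(Q_{\text{old}}(s_k,a_k)+\delta\bigr) = -\delta = |\delta|$; combined with $\text{PIV}(e_k) \ge 0$ this gives $|\text{PIV}(e_k)| \le |\delta|$. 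Collecting the three cases completes the proof.

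The only genuine obstacle is thus the $\text{PIV}$ bound: its argument is the one place where we must use that the argmax of $Q_{\text{old}}$ either stays the argmax of $Q_{\text{new}}$ (when $Q(s_k,a_k)$ was increased) or can only be displaced by actions whose values lie between $Q_{\text{new}}(s_k,a_k)$ and $Q_{\text{old}}(s_k,a_k)$ (when it was decreased). The rest is the single-entry-update observation plus $1$-Lipschitzness of $\max$.
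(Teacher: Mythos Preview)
Your proposal is correct and follows essentially the same route as the paper: the single-entry-update observation, the $1$-Lipschitz (contraction) property of $\max$ for $|\text{EVB}|$, the direct computation for $|\text{EIV}|$, and a case split for $|\text{PIV}|$. The only cosmetic difference is that the paper first rewrites $\text{PIV}(e_k)=\text{EVB}(e_k)-\mathbb{1}_{a_{\text{old}}=a_k}\,\delta$ and then splits on the sign of $\text{TD}$, whereas you split on $a_{\text{old}}=a_k$ first; the two organizations cover the same cases and your version is, if anything, slightly more explicit.
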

\begin{proof}
From (\ref{eq:evb3}), $|\text{EVB}|$ can be written as
\begin{align}\nonumber
|\text{EVB}(e_k)| 
 &= |\max_a{Q_{\text{new}}(s_k,a)} - \max_a{Q_{\text{old}}(s_k,a)} | \\ \nonumber
 &\leq \max_a{|{Q_{\text{new}}(s_k,a)} - {Q_{\text{old}}(s_k,a)}|} \\ \label{eq:evb-q}
 &\leq \alpha|\text{TD}(s_k,a_k, r_k, s_k')|,
\end{align}
where the second line is from the contraction of max operator. 

Proofs for the upper bounds of $|\text{PIV}|$ and $|\text{EIV}|$ are similar and given in Appendix \ref{sec:app1}.
\end{proof}

In Theorem \ref{the:1}, we prove that $|\text{EVB}|$, $|\text{PIV}|$, and $|\text{EIV}|$ are upper-bounded by $|\text{TD}|$ (scaled by the learning step-size) in Q-learning. To verify the bounds experimentally, we simulated a tabular Q-learning agent in a 5 $\times$ 5 grid-world maze
\footnote{All the codes for the experiments are available at: \url{https://github.com/AmazingAng/VER}.}. 
The agent needs to reach the goal zone by moving one square in any of the four directions (north, south, east, west) each time (further details are described in Appendix \ref{sec:app4}). For each transition, we record the associated TD error and value metrics. As we can see from Figure~\ref{fig:fig2_icml}, all three value metrics of experience are bounded by $|\text{TD}|$. As our theory predicts (see Appendix \ref{sec:app1} for detail), $|\text{EIV}|$ is either equal to $|\text{TD}|$ (if the action of the experience is the optimal action before update) or 0. There is a large proportion of EVBs lies on the identity line, indicating the bound is tight. Moreover, we note that a significant proportion of value metrics lies on the x-axis. Because the value metrics are affected by the ``on-policyness" of the experienced actions, and Q-learning learns a deterministic policy that makes most actions of experiences off-policy. As $|\text{TD}|$ intrinsically tracks the evaluation and policy improvements, it can serve as an appropriate importance metric for past experiences.


\section{Extension to Maximum-Entropy RL}

In this section, we extend our framework to study the relationship between $|\text{TD}|$ and value of experience in maximum-entropy RL, particularly, soft Q-learning. 

\subsection{Soft Q-Learning}
\label{softqlearning}

\begin{figure*}[t!]
  \centering
  \includegraphics[width=.83\textwidth, clip=true, trim = 50mm 20mm 50mm 20mm]{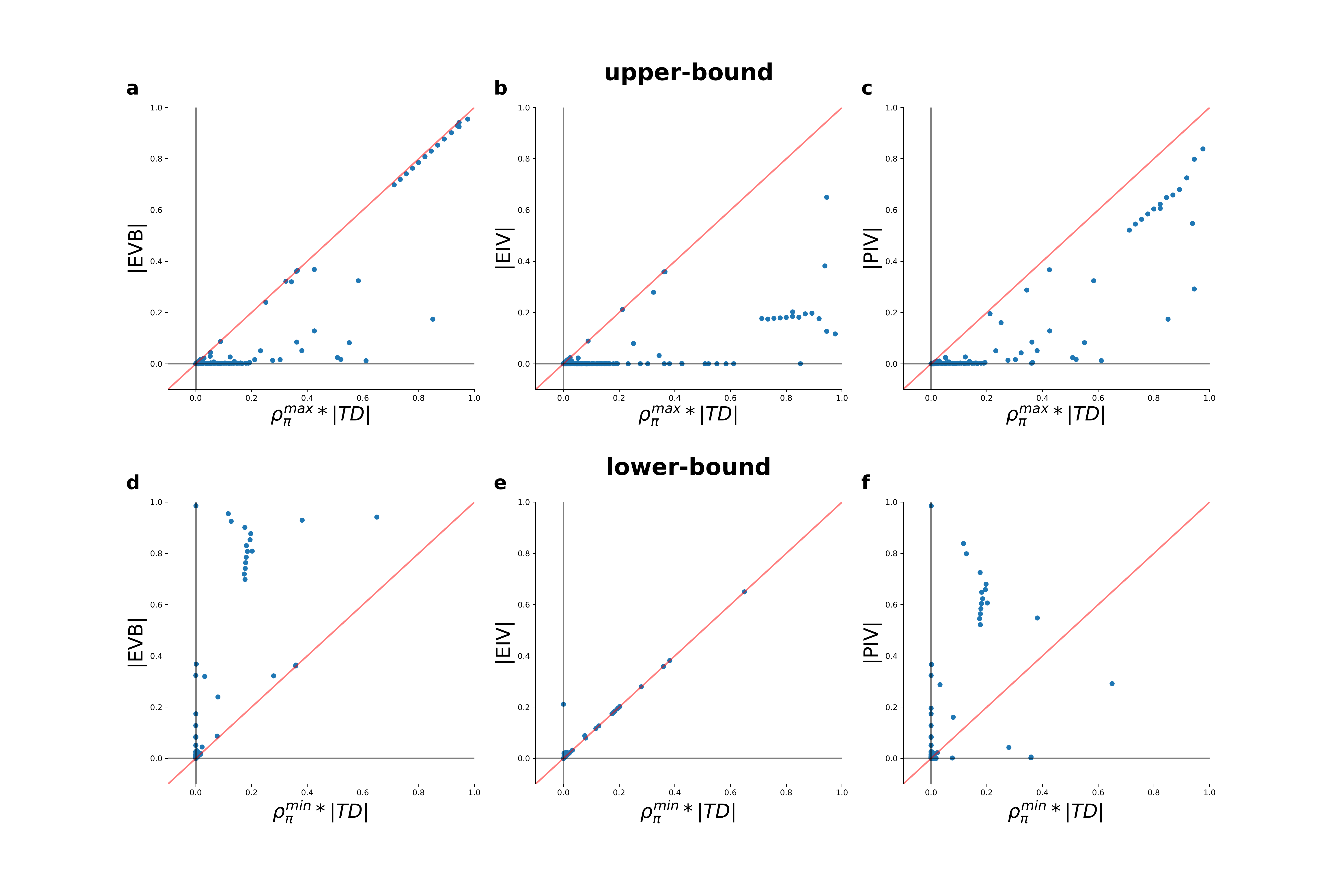}
\caption{The value metrics and their bounds in soft Q-learning. $|\text{EVB}|$ (left), $|\text{EIV}|$ (middle) and $|\text{PIV}|$ (right) as well as their theoretical lower-bound \textbf{a-c.} and upper-bounds \textbf{d-f.} of a tabular soft Q-learning agent in a grid-world maze. The red line indicates the identity line. }
  \label{fig:fig3_icml}
\end{figure*}

Unlike regular RL algorithms, maximum-entropy RL augments the reward with an entropy term: $r + \beta \mathcal{H}(\pi(\cdot|s))$, where $\mathcal{H}(\cdot)$ is the entropy, and $\beta$ is an optional temperature parameter that determines the relative importance of entropy and reward. The goal is to maximize the expected cumulative entropy-augmented rewards. Maximum-entropy RL algorithms have advantages at capturing multiple modes of near optimal policies, better exploration, and better transfer between tasks. 

Soft Q-learning is an off-policy value-based algorithm built on maximum-entropy RL principles \citep{Haarnoja2017, schulman2017equivalence}. Different from Q-learning, the target policy of soft Q-learning is stochastic. During policy iteration, Q-function is updated through soft Bellman operator $\Gamma^\text{soft}$, and the policy is updated to a maximum-entropy policy:
\begin{align*}
Q^\text{soft}_\text{new} (s, a) &= [\Gamma^\text{soft} Q^\text{soft}_\text{old}] (s, a)  = r + \gamma V^\text{soft}_\text{old} (s') \\
\pi_\text{new}(a| s)  &= \text{softmax}_a{(\frac{1}{\beta}Q^\text{soft}_\text{new} (s, a))},
\end{align*}
where $\text{softmax}_i(x) = \exp(x_i)/ \sum_i{\exp(x_i)}$ is the softmax function, and the soft value function $V^\text{soft}_\pi(s)$ is defined as,
\begin{align*}\nonumber
V^\text{soft}_\pi(s) &=  \mathbb{E}_{a}{\{Q^\text{soft}_{\pi}(s,a) - \log(\pi(a|s))\}}\\
&=\beta\log\sum_a\exp(\frac{1}{\beta}Q^\text{soft}_\pi (s, a)).   
\end{align*}
Similar as in Q-learning, the TD error in soft Q-learning (soft TD error) is given by:
\begin{equation*}
\text{TD}^{\text{soft}}(s,a,r,s') =   r + \gamma V^\text{soft}_\text{old} (s') - Q^\text{soft}_\text{old} (s, a).
\end{equation*}

\subsection{Value Metrics of Experience in Maximum-Entropy RL}
Here, we extend the value metrics of experience to soft Q-learning. Similar as (\ref{eq:evb}), EVB for maximum-entropy RL is defined as,
\begin{equation}
\label{eq:evbsoft}
\resizebox{0.9\linewidth}{!}{$
	\begin{split}
		& \text{EVB}^{\text{soft}}(e_k) \\
		& = v^\text{soft}_{\text{new}}(s_k) - v^\text{soft}_{\text{old}}(s_k)  \\ 
		& =  \sum_{a}{\pi_{\text{new}}(a|s_k)\{q^\text{soft}_{\text{new}}(s_k,a)     -\beta\log(\pi_{\text{new}}(a|s_k))\}}  \\
		& \quad -\sum_{a}{\pi_{\text{old}}(a|s_k)\{q^\text{soft}_{\text{old}}(s_k,a)- \beta\log(\pi_{\text{old}}(a|s_k))\}} 
	\end{split}
$}
\end{equation}
$\text{EVB}^{\text{soft}}$ can be separated into $\text{PIV}^{\text{soft}}$ and $\text{EIV}^{\text{soft}}$, which respectively quantify the value of policy and evaluation improvement in soft Q-learning,
\begin{equation}
\label{eq:pivsoft} 	
\resizebox{0.9\linewidth}{!}{$
	\begin{split}
	\text{PIV}^{\text{soft}}(e_k) &= \sum_{a}{\{\pi_{\text{new}}(a|s_k) - \pi_{\text{old}}(a|s_k)\}q^\text{soft}_{\text{new}}(s_k,a)} \\ &\qquad+\beta(H(\pi_{\text{new}}(\cdot|s))-H(\pi_{\text{old}}(\cdot|s_k))),
	\end{split}
$}
\end{equation}
\begin{equation}
	\label{eq:eivsoft}
\resizebox{0.9\linewidth}{!}{$
	\text{EIV}^{\text{soft}}(e_k) = \sum_{a}{\pi_{\text{old}}(a|s_k)[q^\text{soft}_{\text{new}}(s_k,a)-q^\text{soft}_{\text{old}}(s_k,a)]}.
$}
\end{equation}
Value metrics of experience in maximum-entropy RL have similar forms as in regular RL except for the entropy term, because changes in policy lead to changes in the policy entropy and affect the entropy-augmented rewards.

\subsection{Lower and Upper Bounds of Value Metrics of Experience in Soft Q-learning}

We theoretically derive the lower and upper bounds of the value metrics of experience in soft Q-learning.

\begin{theorem}
\label{the:2}
The three value metrics of experience $e_k$ in soft Q-learning ($|\text{EVB}^\text{soft}|$, $|\text{PIV}^\text{soft}|$ and $|\text{EIV}^\text{soft}|$) are upper-bounded by $\rho^\text{max}_\pi * \left| \text{TD}^{\text{soft}}\right|$, where $\rho^\text{max}_\pi = \max\{\pi_{\text{old}}(a_k|s_k), \pi_{\text{new}}(a_k|s_k)\} $ is a policy related term.
\end{theorem}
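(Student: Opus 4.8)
The plan is to reduce all three quantities to a \emph{one-dimensional} perturbation of $Q^{\text{soft}}$ and then exploit convexity of the log-sum-exp that defines $V^{\text{soft}}$. The starting observation is that a single soft backup on $e_k$ touches only the entry $(s_k,a_k)$: writing $\Delta := Q^{\text{soft}}_{\text{new}}(s_k,a_k)-Q^{\text{soft}}_{\text{old}}(s_k,a_k)$, we have $Q^{\text{soft}}_{\text{new}}(s_k,a)=Q^{\text{soft}}_{\text{old}}(s_k,a)$ for every $a\neq a_k$, and $\Delta = \text{TD}^{\text{soft}}(s_k,a_k,r_k,s_k')$ (or $\alpha\,\text{TD}^{\text{soft}}$ if a step-size $\alpha\in[0,1]$ is used, in which case $|\Delta|\le|\text{TD}^{\text{soft}}|$ and the bound only gets easier). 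This immediately settles EIV: in (\ref{eq:eivsoft}) only the $a=a_k$ term survives, so $\text{EIV}^{\text{soft}}(e_k)=\pi_{\text{old}}(a_k|s_k)\,\Delta$, whence $|\text{EIV}^{\text{soft}}(e_k)|=\pi_{\text{old}}(a_k|s_k)\,|\text{TD}^{\text{soft}}|\le\rho^{\text{max}}_\pi|\text{TD}^{\text{soft}}|$.

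For EVB, introduce the scalar function $h(t)=\beta\log\!\big(Z+e^{t/\beta}\big)$ with $Z:=\sum_{a\neq a_k}e^{Q^{\text{soft}}_{\text{old}}(s_k,a)/\beta}$. Using that for a maximum-entropy policy $v^{\text{soft}}_\pi(s)$ collapses to the log-sum-exp form stated in Section~\ref{softqlearning}, we get $v^{\text{soft}}_{\text{old}}(s_k)=h\big(Q^{\text{soft}}_{\text{old}}(s_k,a_k)\big)$ and $v^{\text{soft}}_{\text{new}}(s_k)=h\big(Q^{\text{soft}}_{\text{new}}(s_k,a_k)\big)$. The point of the substitution is that $h'(t)=e^{t/\beta}/(Z+e^{t/\beta})$ is exactly the softmax weight on $a_k$ when its logit is $t$, so $h'\big(Q^{\text{soft}}_{\text{old}}(s_k,a_k)\big)=\pi_{\text{old}}(a_k|s_k)$ and $h'\big(Q^{\text{soft}}_{\text{new}}(s_k,a_k)\big)=\pi_{\text{new}}(a_k|s_k)$, and $h$ is convex (hence $h'$ is monotone with values in $[0,1]$). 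By the mean value theorem $\text{EVB}^{\text{soft}}(e_k)=h'(\xi)\,\Delta$ for some $\xi$ between the two Q-values; by convexity $h'(\xi)$ lies between $\pi_{\text{old}}(a_k|s_k)$ and $\pi_{\text{new}}(a_k|s_k)$, so in particular $0\le h'(\xi)\le\rho^{\text{max}}_\pi$, giving $|\text{EVB}^{\text{soft}}(e_k)|\le\rho^{\text{max}}_\pi|\Delta|\le\rho^{\text{max}}_\pi|\text{TD}^{\text{soft}}|$. (Monotonicity of $h'$ also identifies which endpoint is the maximum: raising $Q^{\text{soft}}(s_k,a_k)$ raises $\pi(a_k|s_k)$, so $\rho^{\text{max}}_\pi$ is $\pi_{\text{new}}(a_k|s_k)$ when $\text{TD}^{\text{soft}}>0$ and $\pi_{\text{old}}(a_k|s_k)$ when $\text{TD}^{\text{soft}}<0$.)

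For PIV, use the decomposition $\text{EVB}^{\text{soft}}=\text{PIV}^{\text{soft}}+\text{EIV}^{\text{soft}}$ implicit in (\ref{eq:evbsoft})--(\ref{eq:eivsoft}). Subtracting $\text{EIV}^{\text{soft}}(e_k)=\pi_{\text{old}}(a_k|s_k)\,\Delta$ from $\text{EVB}^{\text{soft}}(e_k)=h'(\xi)\,\Delta$ gives $\text{PIV}^{\text{soft}}(e_k)=\big(h'(\xi)-\pi_{\text{old}}(a_k|s_k)\big)\Delta$, which therefore lies between $0$ and $\big(\pi_{\text{new}}(a_k|s_k)-\pi_{\text{old}}(a_k|s_k)\big)\Delta$. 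Hence $|\text{PIV}^{\text{soft}}(e_k)|\le\big|\pi_{\text{new}}(a_k|s_k)-\pi_{\text{old}}(a_k|s_k)\big|\,|\text{TD}^{\text{soft}}|\le\rho^{\text{max}}_\pi|\text{TD}^{\text{soft}}|$, the last step because both probabilities are nonnegative, so $|p-q|\le\max\{p,q\}$.

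The only non-routine ingredient is the EVB step: recognizing that $V^{\text{soft}}$, viewed as a function of the single perturbed entry, is a convex log-sum-exp whose derivative is precisely the softmax weight $\rho_\pi$ — this is what upgrades the trivial bound $|\text{EVB}^{\text{soft}}|\le|\text{TD}^{\text{soft}}|$ (which would follow from $h'\le 1$) to the sharper $\rho^{\text{max}}_\pi$-weighted one. The EIV/PIV parts are then bookkeeping around it. The remaining care is to phrase the MVT/convexity argument so it covers $\text{TD}^{\text{soft}}$ of either sign, and to dispatch the degenerate case $|\mathcal{A}|=1$ (where $Z=0$, $\pi\equiv 1$, and all three metrics equal $\text{TD}^{\text{soft}}$); the full details are given in the appendix.
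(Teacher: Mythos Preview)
Your proof is correct and follows essentially the same route as the paper: both arguments hinge on the fact that $V^{\text{soft}}$ is a log-sum-exp in the single perturbed coordinate, convex, with derivative equal to the softmax policy weight, and both handle EIV by direct computation and PIV via the decomposition $\text{EVB}^{\text{soft}}=\text{PIV}^{\text{soft}}+\text{EIV}^{\text{soft}}$. The only cosmetic difference is that you package the convexity step via the mean value theorem (yielding the intermediate bound $|\text{PIV}^{\text{soft}}|\le|\pi_{\text{new}}(a_k|s_k)-\pi_{\text{old}}(a_k|s_k)|\,|\text{TD}^{\text{soft}}|$, slightly sharper than needed), whereas the paper applies the tangent-line inequality for convex functions directly at the two endpoints.
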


\begin{proof}
See Appendix \ref{sec:app2}.
\end{proof}

\begin{theorem}
\label{the:3}
For soft Q-learning, $|\text{EVB}^\text{soft}|$ and $|\text{EIV}^\text{soft}|$  (but not $|\text{PIV}^\text{soft}|$) are lower-bounded by $\rho^\text{min}_\pi * \left| \text{TD}^{\text{soft}}\right|$, where $\rho^\text{min}_\pi = \min\{\pi_{\text{old}}(a_k|s_k), \pi_{\text{new}}(a_k|s_k)\} $ is a policy related term.
\end{theorem}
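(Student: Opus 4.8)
The plan is to use the fact that a soft-Bellman backup on $e_k$ modifies the estimated soft $Q$-values only at the single pair $(s_k,a_k)$, where the change equals the soft TD error: since $q^\text{soft}_\text{new}(s_k,a_k)=r_k+\gamma v^\text{soft}_\text{old}(s_k')$ we get $q^\text{soft}_\text{new}(s_k,a_k)-q^\text{soft}_\text{old}(s_k,a_k)=\text{TD}^{\text{soft}}(s_k,a_k,r_k,s_k')$, while $q^\text{soft}_\text{new}(s_k,a)=q^\text{soft}_\text{old}(s_k,a)$ for every $a\neq a_k$. For $\text{EIV}^{\text{soft}}$ the bound is then immediate: in (\ref{eq:eivsoft}) every term with $a\neq a_k$ vanishes, so $\text{EIV}^{\text{soft}}(e_k)=\pi_{\text{old}}(a_k|s_k)\,\text{TD}^{\text{soft}}$, hence $|\text{EIV}^{\text{soft}}(e_k)|=\pi_{\text{old}}(a_k|s_k)\,|\text{TD}^{\text{soft}}|\ge\rho^\text{min}_\pi\,|\text{TD}^{\text{soft}}|$, the last step using $\rho^\text{min}_\pi\le\pi_{\text{old}}(a_k|s_k)$.

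For $\text{EVB}^{\text{soft}}$ I would write $\text{EVB}^{\text{soft}}(e_k)=v^\text{soft}_\text{new}(s_k)-v^\text{soft}_\text{old}(s_k)$ with $v^\text{soft}(s_k)=\beta\log\sum_a\exp(\tfrac1\beta q^\text{soft}(s_k,a))$ regarded as a function of the vector $q^\text{soft}(s_k,\cdot)$. This log-sum-exp map is convex, and its partial derivative in coordinate $a$ equals $\text{softmax}_a(\tfrac1\beta q^\text{soft}(s_k,\cdot))=\pi(a|s_k)$. Because $q^\text{soft}_\text{new}(s_k,\cdot)$ and $q^\text{soft}_\text{old}(s_k,\cdot)$ differ only in the $a_k$-coordinate, by $\text{TD}^{\text{soft}}$, the first-order convexity inequality evaluated at $q^\text{soft}_\text{old}(s_k,\cdot)$ gives $\text{EVB}^{\text{soft}}(e_k)\ge\pi_{\text{old}}(a_k|s_k)\,\text{TD}^{\text{soft}}$, and the same inequality evaluated at $q^\text{soft}_\text{new}(s_k,\cdot)$ gives $\text{EVB}^{\text{soft}}(e_k)\le\pi_{\text{new}}(a_k|s_k)\,\text{TD}^{\text{soft}}$. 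Thus $\text{EVB}^{\text{soft}}(e_k)$ lies between $\pi_{\text{old}}(a_k|s_k)\,\text{TD}^{\text{soft}}$ and $\pi_{\text{new}}(a_k|s_k)\,\text{TD}^{\text{soft}}$.

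To conclude, split on the sign of $\text{TD}^{\text{soft}}$: if $\text{TD}^{\text{soft}}\ge0$ the sandwich forces $\pi_{\text{old}}(a_k|s_k)\le\pi_{\text{new}}(a_k|s_k)$, so $\rho^\text{min}_\pi=\pi_{\text{old}}(a_k|s_k)$ and $|\text{EVB}^{\text{soft}}|\ge\pi_{\text{old}}(a_k|s_k)\,|\text{TD}^{\text{soft}}|=\rho^\text{min}_\pi|\text{TD}^{\text{soft}}|$; if $\text{TD}^{\text{soft}}<0$ the roles of $\pi_{\text{old}}$ and $\pi_{\text{new}}$ swap and the same inequality holds with $\rho^\text{min}_\pi=\pi_{\text{new}}(a_k|s_k)$. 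A macro-free variant avoids convexity by reducing $\text{EVB}^{\text{soft}}$ to the scalar identity $\text{EVB}^{\text{soft}}=\beta\log\!\big(1-\pi_{\text{old}}(a_k|s_k)+\pi_{\text{old}}(a_k|s_k)e^{\text{TD}^{\text{soft}}/\beta}\big)$ and bounding it via its derivative in $\text{TD}^{\text{soft}}$, which is increasing, lies in $(0,1)$, equals $\pi_{\text{old}}(a_k|s_k)$ at $\text{TD}^{\text{soft}}=0$, and equals $\pi_{\text{new}}(a_k|s_k)$ at the realized value.

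None of the steps is computationally heavy; the one point to handle carefully is the sign bookkeeping certifying that $\rho^\text{min}_\pi$ is always the coefficient giving the valid (weaker) inequality. It is also worth recording, to justify excluding $\text{PIV}^{\text{soft}}$, that $\text{PIV}^{\text{soft}}=\text{EVB}^{\text{soft}}-\text{EIV}^{\text{soft}}$ together with the sandwich above confines $\text{PIV}^{\text{soft}}$ between $0$ and $(\pi_{\text{new}}(a_k|s_k)-\pi_{\text{old}}(a_k|s_k))\,\text{TD}^{\text{soft}}$, so a backup that barely perturbs the local policy can leave $\text{PIV}^{\text{soft}}\approx0$ even when $|\text{TD}^{\text{soft}}|$ is large, which rules out any lower bound of the stated form.
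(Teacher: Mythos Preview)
Your argument is correct and follows essentially the same route as the paper's proof: both exploit the convexity of the LogSumExp map $F(\vec{x})=\beta\log\sum_i\exp(x_i/\beta)$, identify its gradient with the softmax policy, and split on the sign of $\text{TD}^{\text{soft}}$ to extract the lower bound $\rho^\text{min}_\pi\,|\text{TD}^{\text{soft}}|$ for $|\text{EVB}^{\text{soft}}|$; the $|\text{EIV}^{\text{soft}}|$ case is handled identically in both via the exact identity $\text{EIV}^{\text{soft}}=\pi_{\text{old}}(a_k|s_k)\,\text{TD}^{\text{soft}}$. Your presentation is slightly cleaner in that you state the two first-order convexity inequalities (tangent at $q_\text{old}$ and at $q_\text{new}$) as a single sandwich and then read off the sign-dependent lower bound, whereas the paper writes the two cases separately; you also add two things the paper omits: the scalar closed form $\text{EVB}^{\text{soft}}=\beta\log\!\big(1-\pi_{\text{old}}(a_k|s_k)+\pi_{\text{old}}(a_k|s_k)e^{\text{TD}^{\text{soft}}/\beta}\big)$ as an alternative derivation, and an explicit argument (via $0\le \text{PIV}^{\text{soft}}\le(\pi_{\text{new}}-\pi_{\text{old}})\,\text{TD}^{\text{soft}}$) for why $|\text{PIV}^{\text{soft}}|$ admits no lower bound of the stated form, which the paper only asserts without proof.
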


\begin{proof}
See Appendix \ref{sec:app3}.
\end{proof}

\begin{figure*}[t!]
  \centering
  \setlength{\abovecaptionskip}{7pt}
  \includegraphics[width=.75\textwidth, clip=true, trim = 50mm 50mm 50mm 50mm]{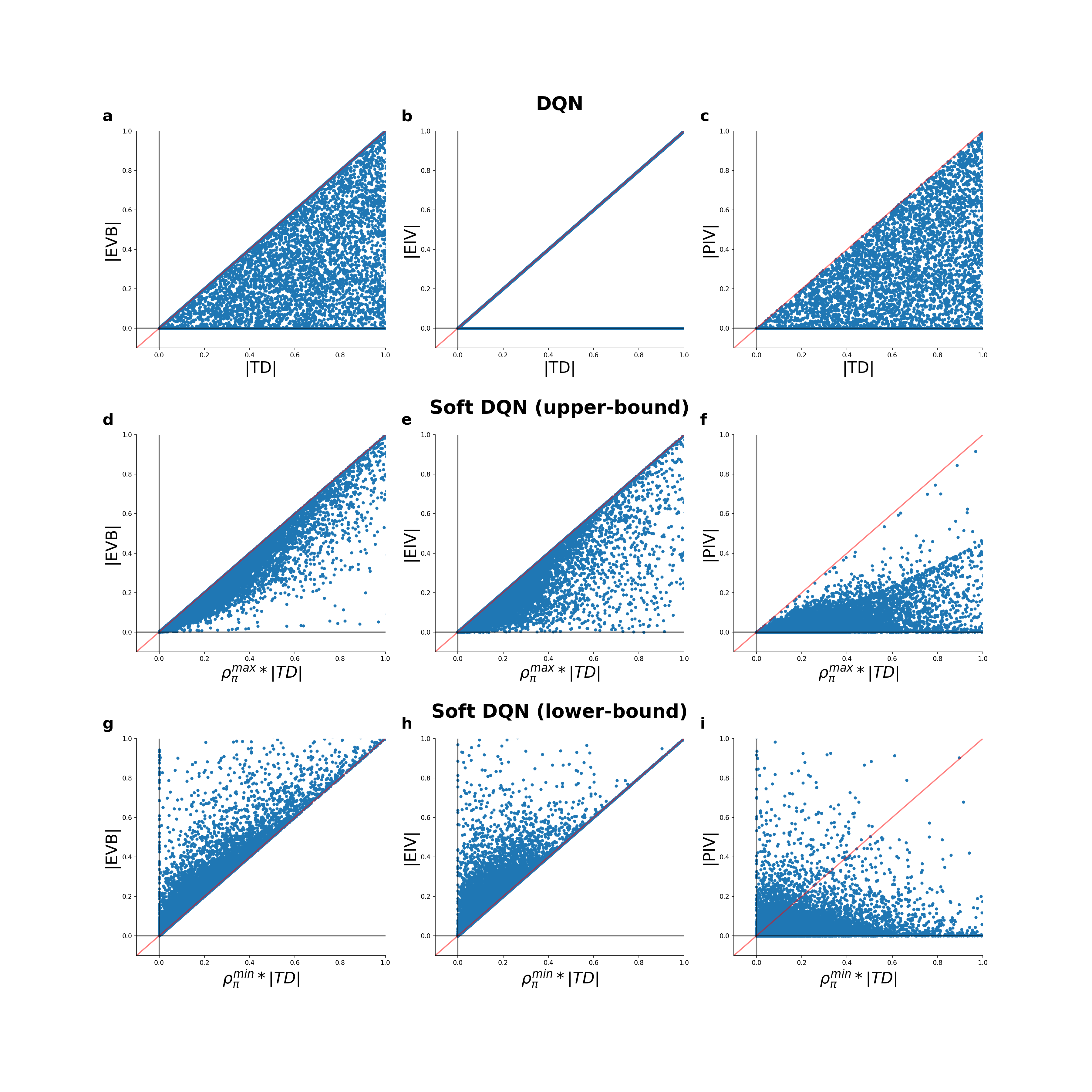}
  \caption{Results of DQN and soft DQN in CartPole. \textbf{a-c.} $|\text{TD}|$ \textit{v.s.} $|\text{EVB}|$ (left), $|\text{EIV}|$ (middle) and $|\text{PIV}|$ (right) in DQN. \textbf{d-f.} Theoretical upper bound and (\textbf{g-i.}) lower bound \textit{v.s.} $|\text{EVB}|$ (left), $|\text{EIV}|$ (middle) and $|\text{PIV}|$ (right) in soft DQN. The red line indicates the identity line.}
  \label{fig:fig4_icml}
\end{figure*}

The lower and upper bounds in soft Q-learning include a policy term with $|\text{TD}|$. The policy related term $\rho_\pi$ quantifies the ``on-policyness'' of the experienced action. And the bounds become tighter as the difference between $\pi_{\text{old}}(a_k|s_k)$ and $\pi_{\text{new}}(a_k|s_k)$ becomes smaller. Surprisingly, the coefficient of the entropy term $\beta$ impacts the bound only through the policy term, which makes it an excellent priority even $\beta$ changes during learning \citep{haarnoja2018soft}.  As $0 \leq \rho^\text{max}_\pi \leq 1$, the value metrics are also upper-bounded by $|\text{TD}|$ alone, which is similar as in Q-learning. However,  as $\pi(a_k|s_k)$ is usually less than $1$,  $|\text{TD}|$ is a looser upper bound in soft Q-learning. 

To verify the bounds in soft Q-learning experimentally, we simulated a tabular soft Q-learning agent in the grid-world maze described previously. From upper panel of Figure~\ref{fig:fig3_icml}, all three value metrics of experience are upper-bounded by $\rho^\text{max}_\pi * |\text{TD}|$. Moreover, from lower panel of Figure~\ref{fig:fig3_icml}, $|\text{EVB}^\text{soft}|$ and $|\text{EIV}^\text{soft}|$  (but not $|\text{PIV}^\text{soft}|$) are lower-bounded by $\rho^\text{min}_\pi * \left| \text{TD}^{\text{soft}}\right|$, supporting our theoretical analysis  (Theorem \ref{the:2} and \ref{the:3}). The proportion of non-zero values of experiences is higher in soft Q-learning than in Q-learning, because, different from greedy policy of Q-learning, soft Q-learning learns a stochastic policy that makes experiences more ``on-policy" and have non-sparse values. In summary, the experimental results support the theoretical bounds of value metrics in tabular soft Q-learning.

\section{Extension to Function Approximation Methods}
\label{sec:FA}

Function approximation methods, which are more powerful and expressive than tabular methods, are effective in solving more challenging tasks, such as the game of Go \citep{silver2016mastering}, video games \citep{Mnih2015} and robotic control \citep{Haarnoja2017}. In these methods, we learn a parameterized Q-function $Q(s,a; \theta_t)$, where the parameters are updated on experience $e_k$ through gradient-based method,
\begin{equation}\nonumber
\theta_{t+1} = \theta_{t} + \alpha \text{TD} \nabla_{\theta_{t}} Q(s_k,a_k; \theta_t)),
\end{equation}
where $\alpha$ is the learning rate, the TD error is defined as:
\begin{equation}\nonumber
\text{TD}= Q_{\text{target}}(s_k,a_k) - Q(s_k,a_k;\theta_t),
\end{equation}
and the target Q-value $Q_{\text{target}}$ is defined as
\begin{equation}\nonumber
Q_{\text{target}}(s_k,a_k) = r_k + \gamma\max_{a'}{Q(s'_k,a'; \theta_t)}.
\end{equation}
As $\alpha$ in function approximation Q-learning is usually very small, for each update, the parameterized function moves to its target by a small amount. 

Our framework can be extended to function approximation methods by slightly modifying the definition of the value metrics of experience. Note that if we apply the original definition of EVB in (\ref{eq:evb3}) directly to function approximation methods, the Q-function after the update $Q_{\text{new}}(s,a)= Q(s,a;\theta_{t+1})$ involves gradient-based update, which complicates the analysis and breaks the inequalities derived in the tabular case. As a remedy, we replace $Q(s,a; \theta_{t+1})$ by the target Q-value $Q_{\text{target}}(s,a)$ in the value metrics of experience (\ref{eq:evb}-\ref{eq:eiv}) and (\ref{eq:evbsoft}-\ref{eq:eivsoft}). The intuition is simple: the value is defined by the cause of the update (target Q-value), but not the result of the update through gradient-based method. Moreover, this modification allows our theory to apply to all function approximation methods, regardless the specific forms of the function approximator (linear function or neural networks). After the modifications, the value metrics of experience have similar form as the tabular case, and all theorems derived in the tabular case can be applied to function approximation methods.

To test whether our theoretical predictions hold in function approximation methods, we simulated one DQN (Deep-Q network) agent and one soft DQN (DQN with soft update) agent in CartPole environment, where the goal is to keep the pole balanced by moving the cart forward and backward (further details are described in Appendix~\ref{sec:app4}). From Figure~\ref{fig:fig4_icml}, all value metrics of experience in DQN (Figure~\ref{fig:fig4_icml}a-c) and soft DQN (Figure~\ref{fig:fig4_icml}d-f) are bounded by the theoretical upper bounds. For DQN, $|\text{EVB}|$ and $|\text{PIV}|$ are uniformly distributed in the bounded area, while $|\text{EIV}|$ are equal to $|\text{TD}|$ or $0$. Results are different in soft DQN, where $|\text{EVB}|$ and $|\text{EIV}|$ are distributed more closely towards the theoretical upper bounds, suggesting the upper bound in soft Q-learning is tighter. Moreover, (Figure~\ref{fig:fig4_icml}g-i) shows the $|\text{EVB}|$ and $|\text{EIV}|$ are lower-bounded by $\rho^\text{min}_\pi * \left| \text{TD}^{\text{soft}}\right|$, while $|\text{PIV}|$ are not. The experimental results confirm the bounds of value metrics hold for function approximation methods.

\section{Experiments on Atari Games}

\begin{figure}[t!]
  \centering
  \includegraphics[width=.85\columnwidth, clip=true, trim = 10mm 20mm 10mm 25mm]{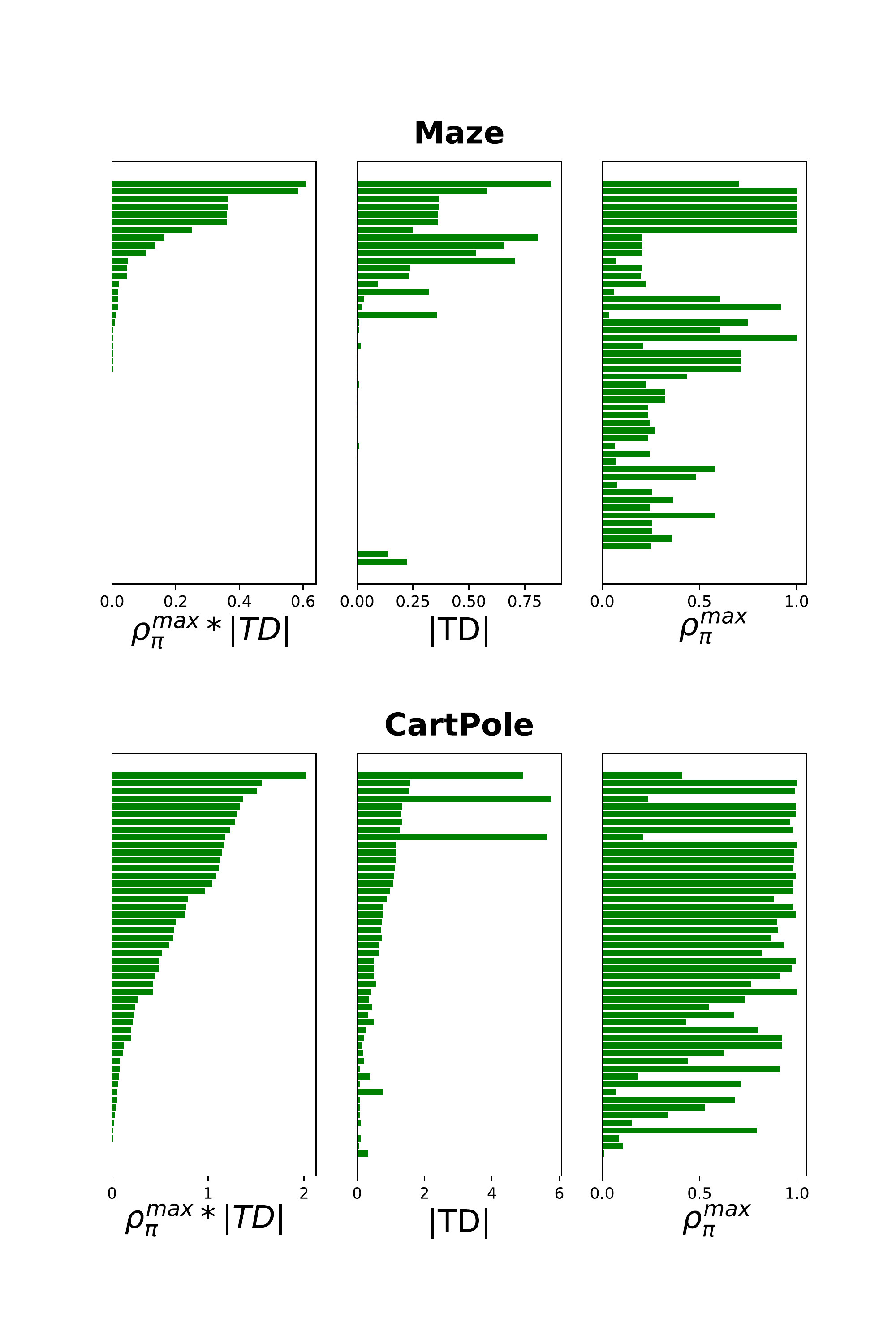}
  \caption{Illustration on the difference between $|\text{TD}|$ and theoretical upper-bound for value metrics in soft Q-learning. Depicted are the theoretical upper bound (left), $|\text{TD}|$ (middle), and the policy term (right) of 50 experiences from the replay buffer in the grid-world maze (upper panel) and CartPole (lower panel), ordered by the theoretical upper bound.}
  \label{fig:supp2}
  \vspace*{-0.2cm}
\end{figure}

\begin{figure*}[t!]
  \centering
  \includegraphics[width=0.75\textwidth]{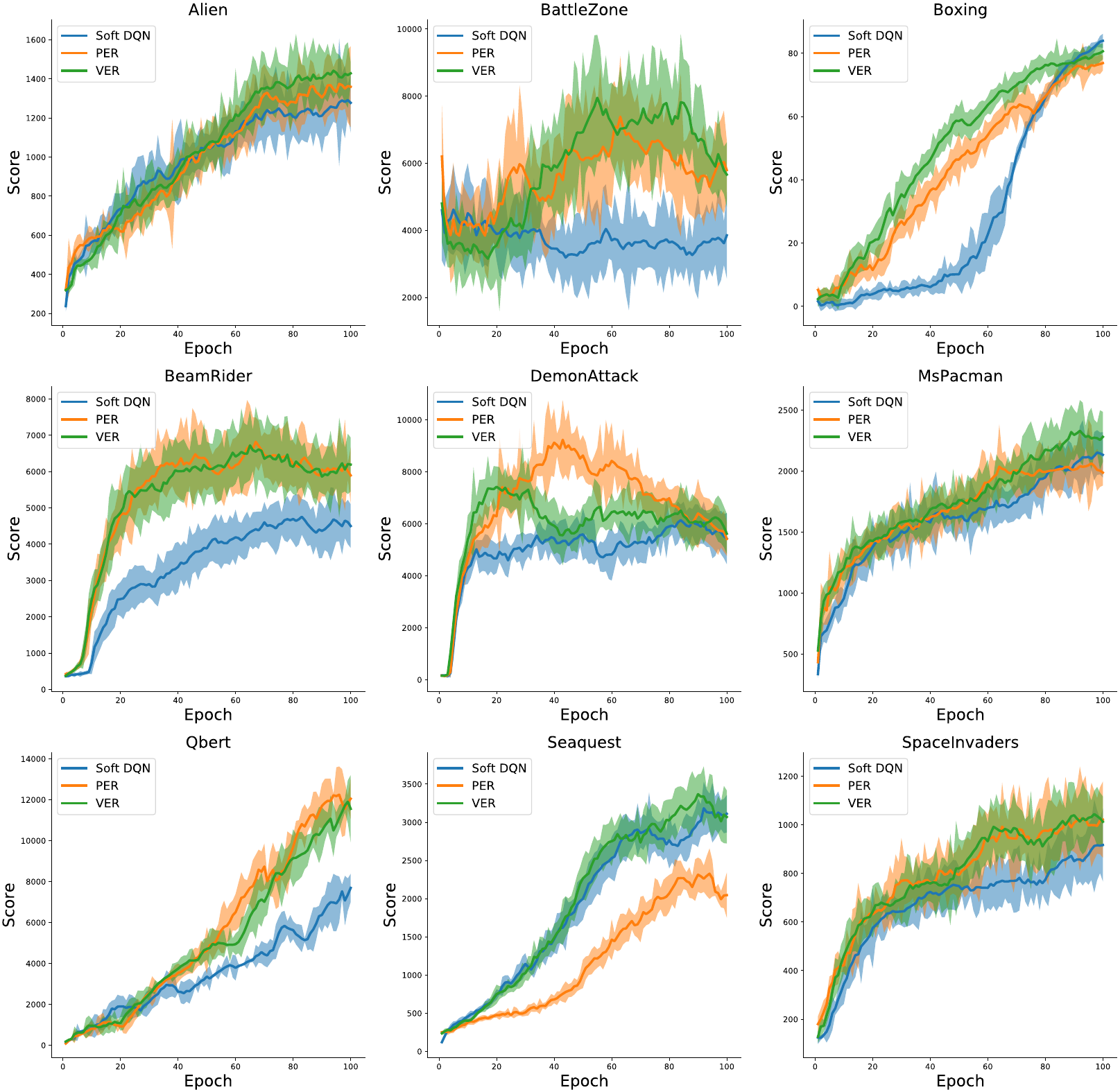}
  \caption{Learning curve of soft DQN (blue lines), and soft DQN with prioritized experience replay in term of soft TD error (PER, orange lines) and the theoretical upper bound of value metrics of experience (VER, green lines) on Atari games. Solid lines are average return over 8 evaluation runs and shaded area is the standard error of the mean.}
  \label{fig:fig4}
\end{figure*}

The theoretical upper bound ($\rho^\text{max}_\pi * |\text{TD}|$) of value metrics balances the prediction error and ``on-policyness" of the experience. To better illustrate the difference between the theoretical upper bound and $|\text{TD}|$, we randomly drew 50 experiences from the grid-world maze and CartPole experiments. From Figure~\ref{fig:supp2}, we can see that the experiences with highest theoretical upper bounds are associated with higher $|\text{TD}|$ and "on-policyness". To investigate whether the theoretical upper bound can serve as an appropriate priority for experience replay in soft Q-learning, we compare the performance of soft DQN with different prioritization strategies: uniform replay, prioritization with $|\text{TD}|$ or the theoretical upper bound ($\rho^\text{max}_\pi * \left| \text{TD}^{\text{soft}}\right|$), which are denoted by soft DQN, PER and VER (valuable experience replay) respectively. This set of experiments consists of 9 selected Atari 2600 games according to \cite{schulman2017equivalence}, which balances generality of the games and limited compute power. We closely follow the experimental setting and network architecture outlined by \cite{Mnih2015}. For each game, the network is trained on a single GPU for 40M frames, or approximately 5 days. More details for the settings and hyperparameters are available in Appendix \ref{sec:app4}.


Figure~\ref{fig:fig4} shows that soft DQN prioritized by $|\text{TD}|$ or the theoretical upper bound substantially outperforms uniform replay in most of the games. On average, soft DQN with PER or VER outperform vanilla soft DQN by 11.8\% or 18.0\% respectively. Moreover, VER shows higher convergence speed and outperforms PER in most of the games (8.47\% on average), which suggest that a tighter upper bound on value metrics improves the performance of experience replay. These results suggest that the theoretical upper bound can serve as an appropriate priority for experience replay in soft Q-learning.

\section{Discussion}

In this work, we formulate a framework to study relationship between the importance of experience and $|\text{TD}|$. To quantify the importance of experience, we derive three value metrics of experience: expected value of backup, policy evaluation value, and policy improvement value. For Q-learning, we theoretically show these value metrics are upper-bounded by $|\text{TD}|$. Thus, $|\text{TD}|$ implicitly tracks the value of the experience, which leads to high sample efficiency of PER. Furthermore, we extend our framework to maximum-entropy RL, by showing that these value metrics are lower and upper-bounded by the product of a policy term and $|\text{TD}|$. Experiments in grid-world maze and CartPole support our theoretical results for both tabular and function approximation RL methods, showing that the derived bounds hold in practice. Moreover, we show that experience replay using the upper bound as a priority improves maximum-entropy RL (\textit{i.e.}, soft DQN) in Atari games.

By linking $|\text{TD}|$ and value of experience, two important quantities in learning, our study has the following implications. First, from a machine learning perspective, our study provides a framework to derive appropriate priorities of experience for different algorithms, with possible extension to batch RL \citep{fu2020d4rl} and N-step learning \citep{hessel2017rainbow}. Second, for neuroscience, our work provides insight on how brain might encode the importance of experience. Since $|\text{TD}|$ biologically corresponds to the reward prediction-error signal in the dopaminergic system \citep{schultz1997neural, glimcher2011understanding} and implicitly tracks the value of the experience, the brain may account on it to differentiate important experiences.

\bibliography{main_arxiv.bib}
\bibliographystyle{icml_arxiv}

\appendix
\onecolumn

\section{Appendix}
\subsection{Proof of Theorem \ref{the:1}}
\label{sec:app1}
In this section, we derive the upper bounds of $|\text{PIV}|$ and $|\text{EIV}|$ in Q-learning. $|\text{PIV}|$ can be written as
\begin{align}\small \nonumber
|\text{PIV}(e_k)|
&= |\max_a{Q_{\text{new}}(s_k,a)} - Q_{\text{new}}(s_k, \arg\max_a{Q_{\text{old}}(s_k,a)})| \\ \nonumber
&= \max_a{Q_{\text{new}}(s_k,a)} - Q_{\text{new}}(s_k, \arg\max_a{Q_{\text{old}}(s_k,a)}) \\\label{eq:piv-mid}
&= \max_a{Q_{\text{new}}(s_k,a)} -  \max_a{Q_{\text{old}}(s_k,a)} - \1_\mathrm{a_{\text{old}}= a_k} \alpha \text{TD}(s_k, a_k, r_k, s_k') 
\end{align}


where the second line is from that the change in Q-function following greedy policy improvement is greater or equal to 0, and the third line is from the update of Q-function. For $\text{TD}(s_k, a_k, r_k, s_k') \geq 0$, we have
\begin{equation}\nonumber\small
0 \leq \max_a{Q_{\text{new}}(s_k,a)} -  \max_a{Q_{\text{old}}(s_k,a)} \leq \alpha\text{TD}(s_k, a_k, r_k, s_k').
\end{equation}
And for $\text{TD}(s_k, a_k, r_k, s_k') \leq 0$, we have
\begin{equation}\nonumber\small
\max_a{Q_{\text{new}}(s_k,a)} -  \max_a{Q_{\text{old}}(s_k,a)} \leq 0.
\end{equation}
Bring above inequalities to \ref{eq:piv-mid}, we have
\begin{equation}\small\label{eq:piv-q}
|\text{PIV}(e_k)| \leq \alpha|\text{TD}(s_k,a_k,r_k,s_k')|
\end{equation}

Similarly, $|\text{EIV}|$ can be written as follows,
\begin{align}\small\nonumber
|\text{EIV}(e_k)| 
&= |Q_{\text{new}}(s_k,a_{\text{old}}) - Q_{\text{old}}(s_k, a_{\text{old}})| \\ \nonumber
&= \1_\mathrm{s =s_k, a_{\text{old}}= a_k} \alpha|\text{TD}(s_k, a_k, r_k, s_k')|\\ \label{eq:eiv-q}
 &\leq \alpha|\text{TD}(s_k,a_k, r_k, s_k')|
\end{align}
For equations (\ref{eq:evb-q}) and (\ref{eq:piv-q}), the equality is reached if the experienced action is the same as the best action before and after the update. For (\ref{eq:eiv-q}), the equality is met if the experienced action is the best action before update. 
\hfill $\square$

\subsection{Proof of Theorem \ref{the:2}}
\label{sec:app2}
In this section, we derive upper bounds of value metrics of experience in soft Q-learning. For soft-Q learning, $|\text{EVB}^\text{soft}|$ can be written as
\begin{align*}\small
|\text{EVB}^\text{soft}(e_k)| &=  |\beta\log\sum_a\exp(\frac{1}{\beta}Q^\text{soft}_{\text{new}} (s_k, a)) -  \beta\log\sum_a\exp(\frac{1}{\beta}Q^\text{soft}_{\text{old}} (s_k, a)) | \\
 &= 
 |\beta\log\sum_a\exp{(\frac{1}{\beta}(Q^\text{soft}_{\text{old}} (s_k, a) +  \1_\mathrm{ a= a_k}  \text{TD}^{\text{soft}}))} -
 \beta\log\sum_a\exp{\frac{1}{\beta}Q^\text{soft}_{\text{old}} (s_k, a)} | .
\end{align*}

Let us define the LogSumExp function $F(\vec{x}) = \beta\log\sum_i\exp{(\frac{x_i}{\beta})}$. The LogSumExp function $F(\vec{x})$ is convex, and is strictly and monotonically increasing everywhere in its domain \citep{elghaouilivebookopt2018}. The partial derivative of $F(\vec{x})$ is a softmax function
\begin{equation}\nonumber\small
\frac{\partial F(\vec{x})}{\partial x_i} = \text{softmax}_i{(\frac{1}{\beta}\vec{x})} \geq 0,
\end{equation}
which takes the same form as the policy of soft Q-learning. For $\epsilon < 0$, we have:
\begin{equation}\nonumber\small
\epsilon  \frac{\partial F(x_1,...,x_i,...)}{\partial x_i}\leq F(x_1,...,x_i +\epsilon, ...)-F(x_1,...,x_i , ...)\leq 0.
\end{equation}

Similarly, for $\epsilon \geq 0$, we have,
\begin{equation}\nonumber\small
0 \leq F(x_1,...,x_i +\epsilon, ...)-F(x_1,...,x_i , ...)\leq \epsilon  \frac{\partial F(x_1,...,x_i+ \epsilon,...)}{\partial x_i}.
\end{equation}
By substituting $x_i$ by $Q^\text{soft}_{\text{old}} (s_k, a_k)$ and $\epsilon$ by $\text{TD}^{\text{soft}}$, and rewriting partial derivative of $F(\vec{x})$ into policy form, we have following inequalities. For $\text{TD}^{\text{soft}} \leq 0$
\begin{equation}\nonumber\small
\pi_{\text{old}}(a_k|s_k) \text{TD}^{\text{soft}} 
\leq \beta\log\sum_a\exp(\frac{1}{\beta}Q^\text{soft}_{\text{new}} (s_k, a)) -  \beta\log\sum_a\exp(\frac{1}{\beta}Q^\text{soft}_{\text{old}} (s_k, a)) \leq 0 
\end{equation}
Similarly, for $\text{TD}^{\text{soft}} > 0$, we have :
\begin{equation}\nonumber\small
0 \leq \beta\log\sum_a\exp(\frac{1}{\beta}Q^\text{soft}_{\text{new}} (s_k, a)) -  \beta\log\sum_a\exp(\frac{1}{\beta}Q^\text{soft}_{\text{old}} (s_k, a)) 
\leq \pi_{\text{new}}(a_k|s_k) \text{TD}^{\text{soft}} 
\end{equation}

Thus, we have the upper bounds of $|\text{EVB}^\text{soft}|$,
\begin{equation}\nonumber\small
\left|\text{EVB}^\text{soft}(e_k)\right| \leq \max\{\pi_{\text{old}}(a_k|s_k), \pi_{\text{new}}(a_k|s_k)\} *\left|\text{TD}^{\text{soft}} \right|.
\end{equation}

For $|\text{PIV}^\text{soft}|$, we have,
\begin{align*}\small
|\text{PIV}^\text{soft}(e_k)| &=  |\sum_{a}{\pi_{\text{new}}(a|s)\{Q^\text{soft}_{\text{new}}(s_k,a) - \beta\log(\pi_{\text{new}}(a|s))\}} \\
& \quad\quad\quad\quad -\sum_{a}{\pi_{\text{old}}(a|s)\{Q^\text{soft}_{\text{new}}(s_k,a)- \beta\log(\pi_{\text{old}}(a|s))\}}| \\
&= \sum_{a}{\pi_{\text{new}}(a|s)\{Q^\text{soft}_{\text{new}}(s_k,a) - \beta\log(\pi_{\text{new}}(a|s))\}} \\
& \quad\quad\quad\quad -\sum_{a}{\pi_{\text{old}}(a|s)\{Q^\text{soft}_{\text{old}}(s_k,a)- \beta\log(\pi_{\text{old}}(a|s))\}} - \pi_{\text{old}}(a_k|s) \text{TD}^{\text{soft}}  \\
&= \beta\log\sum_a\exp{\frac{Q^\text{soft}_{\text{new}} (s_k, a)}{\beta}} -\beta\log\sum_a\exp{\frac{Q^\text{soft}_{\text{old}} (s_k, a)}{\beta}}-\pi_{\text{old}}(a_k|s) \text{TD}^{\text{soft}},
\end{align*}
where the second line is because the policy improvement value is always greater than or equal to 0, and the third line is by reordering the equation. 

For $\text{TD}^{\text{soft}} > 0$, we have:
\begin{equation}\nonumber\small
0 \leq \beta\log\sum_a\exp{\frac{Q^\text{soft}_{\text{new}} (s_k, a)}{\beta}} -\beta\log\sum_a\exp{\frac{Q^\text{soft}_{\text{old}} (s_k, a)}{\beta}}-\pi_{\text{old}}(a_k|s) \text{TD}^{\text{soft}}
\leq \pi_{\text{new}}(a_k|s_k) \text{TD}^{\text{soft}} 
\end{equation}
For $\text{TD}^{\text{soft}} \leq 0$, we have:
\begin{equation}\nonumber\small
0 \leq \beta\log\sum_a\exp{\frac{Q^\text{soft}_{\text{new}} (s_k, a)}{\beta}} -\beta\log\sum_a\exp{\frac{Q^\text{soft}_{\text{old}} (s_k, a)}{\beta}}-\pi_{\text{old}}(a_k|s) \text{TD}^{\text{soft}}
\leq \pi_{\text{old}}(a_k|s_k) \text{TD}^{\text{soft}} 
\end{equation}\nonumber
Thus, we have the upper bounds of $ \left|\text{PIV}^\text{soft} \right|$:
\begin{equation}\nonumber\small
\left|\text{PIV}^\text{soft}(e_k) \right|\leq  \max\{\pi_{\text{old}}(a_k|s_k), \pi_{\text{new}}(a_k|s_k)\} *\left|\text{TD}^{\text{soft}} \right|
\end{equation}

Also, for $|\text{EIV}^\text{soft}|$, we have:
\begin{align*}\small
|\text{EIV}^\text{soft}(e_k)| &=  | \sum_{a}{\pi_{\text{old}}(a|s)[Q^\text{soft}_{\text{new}}(s_k,a)-Q^\text{soft}_{\text{old}}(s_k,a)]} | \\
&= \pi_{\text{old}}(a|s) *\left|\text{TD}^{\text{soft}} \right| \\
&\leq \max\{\pi_{\text{old}}(a_k|s_k), \pi_{\text{new}}(a_k|s_k)\} *\left|\text{TD}^{\text{soft}} \right|
\end{align*}

There is no lower bound of the similar form for $|\text{PIV}^\text{soft}|$.
\hfill $\square$

\subsection{Proof of Theorem \ref{the:3}}
\label{sec:app3}
In this section, we derive lower bounds of value metrics of experience in soft Q-learning. Similar as deriving upper bounds in Appendix \ref{sec:app2}, we derive the lower bounds for $|\text{EVB}|$ using the the LogSumExp function $F(\vec{x}) = \beta\log\sum_i\exp{(\frac{x_i}{\beta})}$. For $\epsilon < 0$, we have:
\begin{equation}\nonumber\small
F(x_1,...,x_i +\epsilon, ...)-F(x_1,...,x_i , ...) \leq \epsilon  \frac{\partial F(x_1,...,x_i +\epsilon,...)}{\partial x_i} \leq 0.
\end{equation}
Similarly, for $\epsilon \geq 0$, we have,
\begin{equation}\nonumber\small
F(x_1,...,x_i +\epsilon, ...)-F(x_1,...,x_i , ...)\geq \epsilon  \frac{\partial F(x_1,...,x_i,...)}{\partial x_i} \geq 0.
\end{equation}
By substituting $x_i$ by $Q^\text{soft}_{\text{old}} (s_k, a_k)$ and $\epsilon$ by $\text{TD}^{\text{soft}}$, and rewriting partial derivative of $F(\vec{x})$ into policy form, we have following inequalities. For $\text{TD}^{\text{soft}} \leq 0$
\begin{equation}\nonumber\small
 \beta\log\sum_a\exp(\frac{1}{\beta}Q^\text{soft}_{\text{new}} (s_k, a)) -  \beta\log\sum_a\exp(\frac{1}{\beta}Q^\text{soft}_{\text{old}} (s_k, a)) \leq \pi_{\text{new}}(a_k|s_k) \text{TD}^{\text{soft}} \leq 0 
\end{equation}
Similarly, for $\text{TD}^{\text{soft}} > 0$, we have :
\begin{equation}\nonumber\small
\beta\log\sum_a\exp(\frac{1}{\beta}Q^\text{soft}_{\text{new}} (s_k, a)) -  \beta\log\sum_a\exp(\frac{1}{\beta}Q^\text{soft}_{\text{old}} (s_k, a)) 
\geq \pi_{\text{old}}(a_k|s_k) \text{TD}^{\text{soft}} \geq 0
\end{equation}
Thus, we have the lower bounds of $|\text{EVB}^\text{soft}|$,
\begin{equation}\nonumber\small
\left|\text{EVB}^\text{soft}(e_k)\right| \geq \min\{\pi_{\text{old}}(a_k|s_k), \pi_{\text{new}}(a_k|s_k)\} *\left|\text{TD}^{\text{soft}} \right|.
\end{equation}

For $|\text{EIV}^\text{soft}|$, we have:
\begin{align*}\small
|\text{EIV}^\text{soft}(e_k)| &=  | \sum_{a}{\pi_{\text{old}}(a|s)[Q^\text{soft}_{\pi_{\text{new}}}(s_k,a)-Q^\text{soft}_{\pi_{\text{old}}}(s_k,a)]} | \\
&= \pi_{\text{old}}(a|s) *\left|\text{TD}^{\text{soft}} \right| \\
&\geq \min\{\pi_{\text{old}}(a_k|s_k), \pi_{\text{new}}(a_k|s_k)\} *\left|\text{TD}^{\text{soft}} \right|
\end{align*}
\hfill $\square$

\subsection{Experimental Details}
\label{sec:app4}

\subsubsection{Grid-world Maze}
\begin{figure*}[t!]
  \centering
  \includegraphics[width=0.7\textwidth]{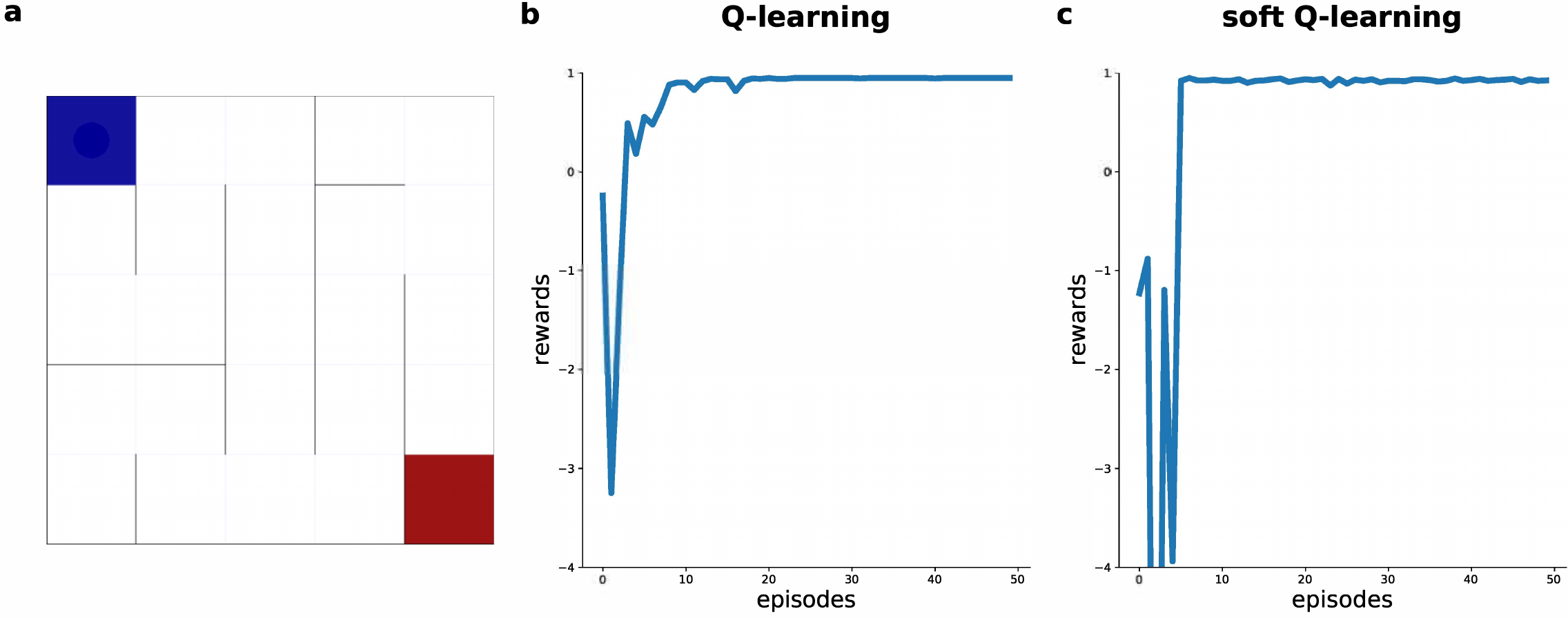}
  \caption{Grid-world maze environment and learning curves.}
  \label{fig:maze}
  \vspace*{-0.2cm}
\end{figure*}
 
For the grid-world maze experiments in section \ref{sec:3}, we use a maze environment of a $5 \times 5$ square with walls, as depicted in Figure~\ref{fig:maze}a. The agent needs to reach the goal zone in the bottom-right corner. At each time step, the agent can choose to move one square in any of the four directions (north, south, east, west). If the move is blocked by a wall or the border of the maze, the agent stays in place. Every time step, the agent gets a reward of $-0.004$ or $1$ if it enters the goal zone and the episode ends. The discount factor is $0.99$ throughout the experiments. 
For these experiments, we use a tabular setting for Q-learning and soft Q-learning according to section \ref{qlearning} and \ref{softqlearning}. For Q-learning, the behavior policy is $\epsilon$-greedy, where $\epsilon$ decays exponentially from $1$ to $0.001$ during training. And we set learning step size $\alpha=1$. For soft Q-learning, the temperature parameter $\beta$ is set to $100$. Total trial number is 50 for each algorithm. During training, both algorithms successfully solve the maze game, see Figure~\ref{fig:maze}b-c for the learning curves.

\subsubsection{CartPole}
For CartPole, the goal is to keep the pole balanced by moving the cart forward and backward for 200 steps. We test our theoretical prediction on DQN and soft-DQN (DQN with soft-update). For DQN, we implement the model according to \cite{Mnih2015}, where we replace the original Q-network with a two-layer MLP, with 256 Relu neurons in each layer. The $\epsilon$ in $\epsilon$-greedy policy decays exponentially from $1$ to $0.01$ for the first $10,000$ steps, and remains $0.01$ afterwards. For soft-DQN, all settings are the same with DQN, except for two modifications: for policy evaluation, the (soft) Q-network is updated according to the soft TD error; the policy follows maximum-entropy policy, calculated as the softmax of the soft Q values (see section \ref{softqlearning}). The temperature parameter $\beta$ is set to $0.5$. For both algorithms, the discount factor is $0.99$, the learning rate is $0.005$, experience buffer size is 1000, the batch size is 16 and total environment interaction is $50,000$.

\subsubsection{Atari Games}
For this set of experiments, we compare the performance of vanilla soft DQN and soft DQN with PER, where we use $|\text{TD}|$ and the theoretical upper bound as priorities \citep{Schaul2016}, respectively denoted as PER and VER (valuable experience replay). We select 9 Atari games for the experiments: Alien, BattleZone, Boxing, BeamRider, DemonAttack, MsPacman, Qbert, Seaquest and SpaceInvaders. The vanilla soft DQN is similar to that described in the above section, but the Q-network has the same architecture as in \cite{Mnih2015}. We implement PER on soft-DQN according to \cite{Schaul2016}. For all algorithms, the temperature parameter $\beta$ is $0.05$, the discount factor is $0.99$, the learning rate is $1\mathrm{e}{-4}$, experience buffer size is $1\text{M}$, the batch size is 32, total environment interaction is $50,000$. For PER or VER, the parameters for importance sampling are $\alpha_{\text{IS}} = 0.4$ and $\beta_{\text{IS}} = 0.6$. For each game, the network is trained on a single GPU for 40M frames, or approximately 5 days.

\end{document}